\documentclass{article}
\usepackage{ijcai26}

\usepackage{times}
\usepackage{soul}
\usepackage{url}
\usepackage[hidelinks]{hyperref}
\usepackage[utf8]{inputenc}
\usepackage[small]{caption}
\usepackage{graphicx}
\usepackage{amsmath}
\usepackage{amssymb}
\usepackage{amsthm}
\usepackage{booktabs}
\usepackage{algorithm}
\usepackage{algorithmic}
\usepackage[switch]{lineno}
\usepackage{multirow}
\usepackage{color}
\usepackage{fancyhdr}
\newtheorem{theorem}{Theorem}
\newtheorem{lemma}{Lemma}
\newtheorem{proposition}{Proposition}
\newtheorem{assumption}{Assumption}

\title{ITBoost: Information-Theoretic Trust for Robust Boosting}

\author{
Ye Su$^{1,2}$
\and
Longlong Zhao$^{1}$\thanks{Corresponding authors}
\and
Diego García-Gil$^{3}$\footnotemark[1]
\and
Jipeng Guo$^4$\and
Gangchun Zhang$^1$\and
Jinxin Chen$^1$\and
Jinsong Chen$^1$\\
\affiliations
$^1$Shenzhen Institutes of Advanced Technology, Chinese Academy of Sciences\\
$^2$University of Chinese Academy of Sciences\\
$^3$Department of Software Engineering, Andalusian Research Institute in Data Science and Computational Intelligence (DaSCI), University of Granada\\
$^4$College of Information Science and Technology, Beijing University of Chemical Technology\\
\emails
\{ye.su, ll.zhao, gc.zhang, jx.chen2, js.chen\}@siat.ac.cn,
djgarcia@ugr.es,
guojipeng@buct.edu.cn
}

\begin{document}

\maketitle

\begin{abstract}
    Gradient boosting remains a strong and widely used method for tabular data learning, but its performance often degrades when training labels are noisy. This behavior is largely related to the way boosting algorithms emphasize samples with large gradients, without explicitly accounting for whether such errors originate from informative hard cases or from unreliable labels. We address this issue by reconsidering how sample reliability is evaluated during boosting. Instead of relying on instantaneous error, we examine the evolution of each sample’s residuals across iterations. Based on this insight, we propose Information-Theoretic Trust Boosting (ITBoost), which uses the Minimum Description Length principle to measure the complexity of residual trajectories. Samples whose residual patterns fluctuate in an irregular manner are treated as less trustworthy and are down-weighted during learning. Theoretically, we derive a tighter generalization bound for ITBoost under label noise. Empirical results on various tabular benchmarks indicate that ITBoost provides improved robustness in noisy environments over leading boosting and deep tabular models, while retaining best average performance on clean data.
\end{abstract}

\section{Introduction}

Gradient Boosting Decision Tree (GBDT) have emerged as the standard for structured data modeling, delivering state-of-the-art performance across a wide range of machine learning tasks \textcolor{blue}{\cite{bentejac2021comparative,shwartz2022tabular}}. The effectiveness of these algorithms lies in their iterative learning paradigm, specifically, by sequentially training weak learners to fit the residuals of the ensemble formed by previous learners \textcolor{blue}{\cite{friedman2001greedy}}. A core driver of this process is the model’s continuous effort to address its own prediction errors: the magnitude of the gradient essentially guides where the model focuses its learning attention in the next iteration.

Yet, this focus on prioritizing large gradients stands out as a key flaw in gradient boosting. Label noise often causes sharp, even catastrophic, performance declines \textcolor{blue}{\cite{long2008random,miao2015rboost,einziger2019verifying}}. Mislabeled samples are persistent sources of error, and their large gradients slowly lead the model astray. \textcolor{blue}{Figure \ref{fig:1}} clearly shows this problem: as noise levels rise from 10\% (a) to 30\% (b), the decision boundary of a standard GBDT becomes fragmented. Here, the model’s rigid reliance on gradient magnitude as a guiding cue leads it to distort its structure to fit noisy samples (marked "X"), a choice that severely undermines generalization. This is not just an edge case: label noise is far from uncommon in practical fields like medicine and finance. As such, GBDT’s vulnerability poses a major obstacle to its real-world use \textcolor{blue}{\cite{frenay2013classification,shi2024survey}}.

\begin{figure*}[ht]
    \centering
    \includegraphics[width=0.85\textwidth]{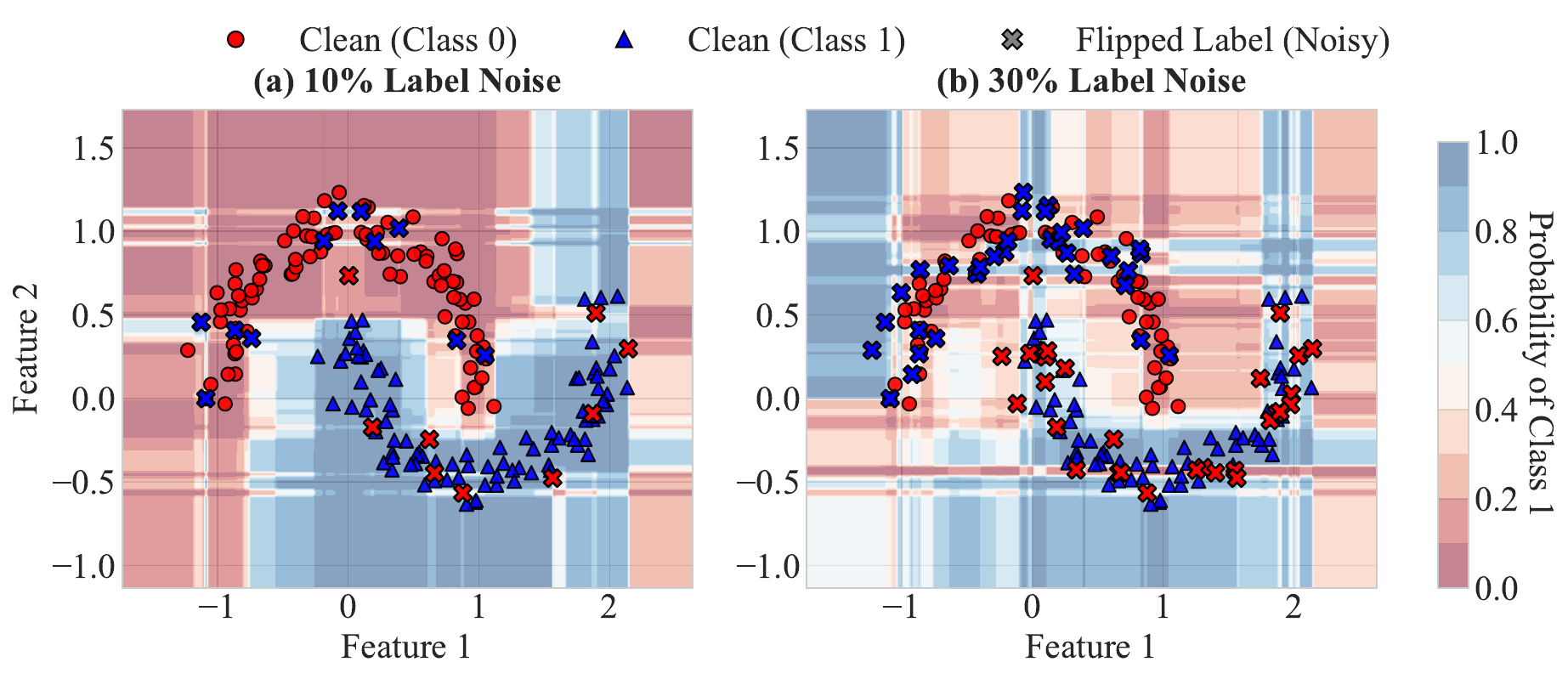}
    \caption{Standard GBDT overfitting noise on data with 10\% and 30\% label noise.}
    \label{fig:1}
\end{figure*}

Current efforts to tackle this issue rely on either robust loss functions \textcolor{blue}{\cite{huber1992robust,zhang2018generalized}} or sample selection strategies \textcolor{blue}{\cite{ferreira2012boosting,dubout2014adaptive,liu2020robust}}, but these approaches fall short of addressing the core issue. They still rest on the flawed premise that gradient magnitude alone dictates a sample’s relevance to the true decision boundary. This presents a challenge that’s hard to resolve: algorithms cannot tell apart two types of samples that both yield large gradients—genuinely valuable hard examples sitting close to the decision boundary, and erroneous noisy samples with incorrect labels \textcolor{blue}{\cite{song2022learning}}. If we blunt all large gradients outright, we end up hampering the model’s ability to learn from critical hard examples; if we accept them uncritically, noise will inevitably corrupt the model’s learning process.

In response to this dilemma, we question this foundational assumption and advance a new perspective: a sample’s gradient reliability should not hinge on its immediate magnitude, but rather on how interpretable its error history is. Hard examples, while tough to learn, produce residual sequences with recognizable structural patterns. By contrast, noisy examples, intrinsically misaligned with the true data distribution, result in far more erratic prediction errors.
Drawing on this observation, we introduce Information-Theoretic Trust Boosting (ITBoost). ITBoost assigns a “trust score” to each sample’s gradient. Instead of only asking, “How large is the sample’s error?”, we shift focus to: “How trustworthy is the historical pattern of its errors?” To formalize this question, we adopt the Minimum Description Length (MDL) principle from information theory \textcolor{blue}{\cite{rissanen1978modeling,grunwald2007minimum}}. Concretely, we compute the sequential complexity of each sample’s residual trajectory online, using the Lempel-Ziv algorithm \textcolor{blue}{\cite{gasieniec1996efficient}} to quantify randomness. A random, uncompressible residual sequence (high complexity) flags the sample as “untrustworthy,” prompting us to reduce its weight in the learning process. This new weighting mechanism lets ITBoost effectively de-emphasize noise while still prioritizing genuinely challenging yet informative samples. Our key contributions are summarized below:

\begin{itemize}
    \item We introduce ITBoost, a robust boosting method that tackles label noise at its root by leveraging an information-theoretic trust framework, an innovative departure from standard boosting.
    \item We develop an MDL-based sample weighting mechanism; uniquely, this mechanism uses residual time-series characteristics instead of instantaneous values to guide the boosting process.
    \item We provide theoretical justification for ITBoost, demonstrating that it achieves a tighter generalization error bound than standard GBDT when label noise is present.
    \item We show that ITBoost delivers optimal performance and robustness, outperforming leading boosting algorithms and modern deep tabular models in noisy environments.
\end{itemize}

\section{Information-Theoretic Boosting}

Our work is based on the insight that instantaneous gradient magnitude is an insufficient metric, and often misleading, for assessing a sample’s true importance. Indeed, a large gradient may indicate just as readily a valuable, hard-to-learn case as it does a corrupted, noisy counterpart. To clear up this ambiguity, we propose leveraging the full history of a sample’s learning trajectory to underpin our assessment of its importance.

\subsection{From Gradient Magnitude to Residual History}
\label{ssec:res_hist}

Let us frame the learning process through a temporal lens. For each sample $x_i$, the sequence of pseudo-residuals, $\mathbf{g}_i = (g_i^{(1)}, g_i^{(2)}, \dots, g_i^{(M)})$, constitutes a time series documenting how the model’s understanding of that sample evolves over iterations. What we argue here is that the traits of this time series hold far more information about the sample’s inherent nature than the magnitude of any single residual element alone.

\begin{itemize}
    \item \textbf{Hard but Clean Samples.} Authentic hard samples exhibit low-complexity residual structures. Specifically, \textit{(i)} boundary samples induce structured sign oscillations as the decision boundary is progressively refined, while \textit{(ii)} persistent hard samples generate stable, monotonic residual sequences due to systematic model underestimation. Although these samples are difficult to learn, both residual patterns are highly compressible and thus have low algorithmic complexity. Guided by the MDL principle, ITBoost correctly identifies them as reliable and assigns higher weights, directing the model toward these valid yet challenging cases.
    
    \item \textbf{A Noisy Sample.} Samples with corrupted labels conflict with the underlying feature structure. Attempts to fit such data lead to erratic and unpredictable model adjustments, producing residual sequences with chaotic sign flips and no discernible pattern, indicative of high complexity. This behavior clearly distinguishes noisy samples from both the structured oscillations of boundary samples and the stable persistence of hard clean samples.
\end{itemize}

It is this fundamental distinction in the temporal signature of residual sequences that forms the core insight underpinning our work. An unpredictable, random-like sequence is a strong indicator that the model’s errors stem from label corruption, rather than being an integral part of the structured learning process.

\subsection{Quantifying Trust with Algorithmic Complexity}

To formalize this intuition, we measure the randomness of a sequence using its algorithmic complexity. The Kolmogorov complexity \textcolor{blue}{\cite{kolmogorov1965three,li2008introduction}}, $K(s)$, of a sequence $s$ is the length of the shortest program that generates $s$. Since $K(s)$ is uncomputable, we employ a practical approximation: the Lempel-Ziv (LZ) complexity, which effectively measures a sequence's compressibility \textcolor{blue}{\cite{wyner2002sliding,zozor2005lempel}}.

At each iteration $m$, we consider the binarized residual history for each sample $i$, $\mathbf{s}_i^{(m)} = (\text{sign}(g_i^{(1)}), \dots, \text{sign}(g_i^{(m)}))$. This binarization is a crucial design choice, as it intentionally discards the noisy and volatile error magnitude to focus purely on the temporal pattern of the error's direction---a more stable and robust signal of a sample's learning trajectory. We then compute its LZ complexity, $C(\mathbf{s}_i^{(m)})$. We define the information-theoretic trust weight, $w_i^{(m)}$, as the product of the gradient magnitude and a trust term, $\tau_i^{(m)}$:
\begin{equation}
    w_i^{(m)} = |g_i^{(m)}| \cdot \tau_i^{(m)},
    \label{eq:5}
\end{equation}
where the trust term is an exponential penalty based on the normalized complexity of its history:
\begin{equation}
    \tau_i^{(m)} = \exp \left( - \bar{C}(\mathbf{s}_i^{(m)}) \right).
    \label{eq:6}
\end{equation}
Here, $\bar{C}(\mathbf{s}_i^{(m)})$ is the LZ complexity, normalized to the range $[0,1]$ across all samples at iteration $m$. This formulation allows ITBoost to temper the influence of samples that have proven to be historically unreliable.

To compute this normalized complexity, we employ a dynamic Min-Max scaling procedure at each boosting iteration $m$. For the set of raw LZ complexities $\{C_1, C_2, \dots, C_N\}$ computed at step $m$, the normalized complexity for sample $i$ is given by:
\begin{equation}
    \bar{C}_i = \frac{C_i - \min(\{C_k\}_{k=1}^N)}{\max(\{C_k\}_{k=1}^N) - \min(\{C_k\}_{k=1}^N)}.
    \label{eq:7}
\end{equation}
This per-iteration, relative normalization is a critical design choice. A key implication is its adaptive response to outliers. It pins the noisy sample's normalized complexity near 1.0, ensuring it receives a significant trust penalty via the exponential term $e^{-\bar{C}_i}$. It compresses the normalized complexities of most clean samples into a narrow band close to 0, thereby shielding them from undue penalization. This dynamic, outlier-driven normalization thus enhances the separation between trustworthy and untrustworthy samples in the trust domain, enabling a more targeted and aggressive suppression of noise.

\subsection{Algorithm Summary}

The complete ITBoost algorithm is presented in \textcolor{blue}{Algorithm \ref{alg:1}}. The key modification to the standard GBDT framework is the introduction of Step 3, where the information-theoretic trust weights are computed. These weights are then used to guide the training of the weak learner in Step 4, replacing the implicit weighting by gradient magnitude in the standard squared-error loss minimization.

\begin{algorithm}[tb]
    \caption{The ITBoost Algorithm}
    \label{alg:1}
    \textbf{Input}: Training data $\mathcal{D} = \{(x_i, y_i)\}_{i=1}^N$, loss $\mathcal{L}(y, F)$, iterations $M$, learning rate $\nu$.
    \begin{algorithmic}[1] 
        \STATE Initialize $F_0(x) \leftarrow \arg\min_c \sum_{i=1}^N \mathcal{L}(y_i, c)$.
        \STATE Initialize empty residual histories $s_i \leftarrow$ "" for $i=1, \dots, N$.
        \FOR{$m=1$ to $M$}
            \STATE \textbf{Step 1. Compute pseudo-residuals:} 
            \STATE \quad $g_i^{(m)} \leftarrow - \left[ \frac{\partial \mathcal{L}(y_i, F(x_i))}{\partial F(x_i)} \right]_{F(x)=F_{m-1}(x)}$
            \STATE \textbf{Step 2. Update binarized residual histories:} 
            \STATE \quad $s_i \leftarrow s_i \circ ('1' \text{ if } g_i^{(m)} > 0 \text{ else } '0')$
            \STATE \textbf{Step 3. Compute information-theoretic trust weights:}
            \STATE \quad Compute LZ complexity for each history: $C_i \leftarrow \text{LZ-Complexity}(s_i)$.
            \STATE \quad Normalize complexities across samples: $\bar{C}_i \leftarrow \text{Normalize}(C_1, \dots, C_N) \text{ to } [0,1]$.
            \STATE \quad Compute final weights using Eq. (\ref{eq:5}): $w_i^{(m)} \leftarrow |g_i^{(m)}| \cdot \exp(-\bar{C}_i)$.
            \STATE \textbf{Step 4. Train weak learner $h_m(x)$ on pseudo-residuals using trust weights:}
            \STATE \quad $h_m \leftarrow \arg\min_{h \in \mathcal{H}} \sum_{i=1}^N w_i^{(m)} \cdot (g_i^{(m)} - h(x_i))^2$
            \STATE Update the ensemble: $F_m(x) \leftarrow F_{m-1}(x) + \nu \cdot h_m(x)$.
        \ENDFOR
        \STATE \textbf{Output}: $F_M(x)$.
    \end{algorithmic}
\end{algorithm}

\section{Theoretical Analysis}

In this section, we provide theoretical justification for the robustness of ITBoost. We demonstrate that our information-theoretic weighting scheme leads to a tighter generalization error bound under a standard model of label noise compared to traditional GBDT.

\subsection{Assumptions}

Let $\mathcal{F}$ be the function class of the final ensemble, which is the convex hull of the base learner class $\mathcal{H}$. We analyze the generalization error, $\mathbb{E}_{x,y}[\ell(y, F(x))]$, where $\ell$ is the 0-1 loss. Our analysis will be based on a surrogate loss, the logistic loss $\mathcal{L}(y, F) = \log ( 1 + \exp ( - yF ))$, for $y \in \{ - 1,1\}$, which is convex and 1-Lipschitz. We adopt a standard model for label noise:

\begin{assumption}[Symmetric Label Noise]
The observed labels $y_i$ in the training set $\mathcal{D}$ are generated from the true (clean) labels $y_i^*$ according to an independent process, such that for a noise rate $p \in [0,1/2)$:
\[
P(y_i \neq y_i^*) = p \quad \text{and} \quad P(y_i = y_i^*) = 1 - p.
\]
\end{assumption}

Our key insight is that noisy samples will, with high probability, exhibit more complex residual histories. We formalize this by assuming that the expected complexity of a noisy sample's residual history is higher than that of a clean sample.

\begin{assumption}[Complexity Separation]
\label{ass:complexity_separation}
Let $C(\mathbf{s}_i)$ be the LZ-complexity of a sample's residual history. There exists a gap $\Delta_C > 0$ such that the expected complexity conditioned on the label being noisy versus clean is separated:
\begin{equation}
\mathbb{E}[C(\mathbf{s}_i)|y_i \neq y_i^*] \geq \mathbb{E}[C(\mathbf{s}_i)|y_i = y_i^*] + \Delta_C.
\label{eq:8}
\end{equation}
\end{assumption}

\textbf{Remark (Scope of Noise Types).} It is important to clarify that \textcolor{blue}{Assumption~\ref{ass:complexity_separation}} is specifically tailored to stochastic or unstructured label noise, the primary type of label noise observed in practical real-world scenarios. In such cases, conflicts between features and their corresponding labels introduce randomness into the model’s learning dynamics. A notable limitation, however, emerges when dealing with systemic or adversarial noise. This kind of noise tends to generate stable, low-complexity residual patterns, which bear resemblance to the persistent hard samples discussed in \textcolor{blue}{Section~\ref{ssec:res_hist}}. Notably, without clean validation data or additional external supervision, purely data-driven methods cannot inherently distinguish such structured corruptions from genuine hard examples. Consequently, this category of structured noise falls outside the scope of this work.

This assumption posits that our complexity measure is a meaningful signal for identifying noise, a hypothesis we validate empirically in \textcolor{blue}{Section~\ref{sec:exp}}. In Eq. (\ref{eq:8}), we assumed that the expected residual complexity of noisy samples is strictly greater than that of clean samples. While this assumption aligns with intuitive reasoning, it is relatively strong and lacks explicit conditions that guarantee the separation $\Delta_C > 0$. To this end, we reformulate the original assumption into two equivalent, verifiable variants, which also allow for finite-sample probabilistic guarantees.

\begin{assumption}[Entropy-Rate or Empirical Separability]
Let each sample $i$ produce a residual-sign sequence $S_i^{(T)} = (s_{i,1}, ..., s_{i,T})$ with normalized complexity $\widehat{C}_T(S_i) \in [0,1]$. We assume one of the following holds:

\textbf{(A) Entropy-rate gap.} Residual sequences for clean and noisy samples can be modeled as two stationary $\psi$-mixing processes with entropy rates $H_{\text{clean}}$ and $H_{\text{noisy}}$, satisfying $H_{\text{noisy}} - H_{\text{clean}} \geq \Delta_H > 0$. Under the standard Lempel--Ziv consistency theorem \textcolor{blue}{\cite{wyner2002sliding}}, the normalized complexity estimator $\widehat{C}_T$ converges exponentially to the true entropy rate:
\begin{equation}
Pr ( |\widehat{C}_T - H| \geq \epsilon ) \leq c_1 e^{-c_2 T \epsilon^2},
\label{eq:9}
\end{equation}
so for sufficiently large $T$, we obtain a probabilistic separation $\mathbb{E}[\widehat{C}_T \mid \text{clean}] \geq \Delta_c > 0$ with $\Delta_c \simeq \Delta_H - 2\epsilon_T$.

\textbf{(B) Empirical separability.} For finite samples, if $\widehat{C}_T \in [0,1]$, let $\bar{C}_{\text{clean}}$ and $\bar{C}_{\text{noisy}}$ denote the empirical means. Then, by Hoeffding's inequality,
\begin{equation}
Pr ( |\bar{C} - \mathbb{E}\widehat{C}| > \epsilon ) \leq 2e^{-2n\epsilon^2}.
\label{eq:10}
\end{equation}
Hence, whenever the true expectation gap $\Delta_c^{\star} > 2\epsilon$, the empirical means are separable with probability at least $1 - \delta$, provided $n \geq \frac{1}{2\epsilon^2}\log \frac{2}{\delta}$.
\end{assumption}

These two formulations replace the untestable Kolmogorov-complexity gap by measurable or statistically verifiable criteria. Empirical separability (B) can be directly checked in experiments, while the entropy-rate formulation (A) connects to the asymptotic theory of universal compression.

\subsection{Generalization Bound under Label Noise}

With these assumptions, we first establish the statistical properties of our trust weights. The following lemma characterizes the behavior of the trust term $\tau$ derived from the complexity random variable.

\begin{lemma}[Hoeffding / sub-Gaussian bounds for $\tau$]
\label{lemma:bounds}
Let $C$ be a real-valued random variable with mean $\mu = \mathbb{E}[C]$ and define $\tau := \mathbb{E}[e^{-C}]$. Then:
\begin{enumerate}
    \item \textbf{Jensen lower bound:} Because $f(x) = e^{-x}$ is convex, $\tau \geq e^{-\mu}$.
    \item \textbf{Bounded-range upper bound:} If $C \in [a, b]$ with range $R = b-a$, Hoeffding's lemma gives $\tau \leq \exp(-\mu + R^2/8)$.
    \item \textbf{Sub-Gaussian upper bound:} If $C - \mu$ is $\sigma^2$-sub-Gaussian, then $\tau \leq \exp(-\mu + \sigma^2/2)$.
\end{enumerate}
\end{lemma}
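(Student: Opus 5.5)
The plan is to treat all three parts as statements about the moment generating function of $C$ evaluated at $\lambda=-1$. Write $\tau=\mathbb{E}[e^{-C}]=e^{-\mu}\,\mathbb{E}[e^{-(C-\mu)}]$. The lower bound then amounts to showing that the centered MGF is at least $1$ at $\lambda=-1$. The two upper bounds amount to standard estimates of $\log\mathbb{E}[e^{\lambda(C-\mu)}]$ under boundedness and under a sub-Gaussian tail, specialized to $\lambda=-1$. Throughout I take $\mathbb{E}|C|<\infty$, so that $\mu$ is finite. I also allow $\tau=+\infty$ in part 1, where the inequality is then trivial.

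\textbf{Step 1 (Jensen).} The function $f(x)=e^{-x}$ is convex on $\mathbb{R}$, since $f''(x)=e^{-x}>0$. Jensen's inequality therefore gives $\mathbb{E}[f(C)]\ge f(\mathbb{E}C)$, which is exactly $\tau\ge e^{-\mu}$. Equivalently, I can avoid Jensen and use the tangent-line inequality $e^{-x}\ge e^{-\mu}\bigl(1-(x-\mu)\bigr)$. Substituting $x=C$ and taking expectations kills the linear term.

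\textbf{Step 2 (bounded range).} Suppose $C\in[a,b]$. Set $X=C-\mu$, so that $X\in[a-\mu,b-\mu]$, an interval of length $R$, and $\mathbb{E}X=0$. Hoeffding's lemma states $\mathbb{E}[e^{\lambda X}]\le \exp(\lambda^2R^2/8)$ for every real $\lambda$. Taking $\lambda=-1$ gives $\mathbb{E}[e^{-X}]\le e^{R^2/8}$, and multiplying by $e^{-\mu}$ yields $\tau\le\exp(-\mu+R^2/8)$. If I wanted the argument self-contained, I would also sketch Hoeffding's lemma itself. That sketch bounds $e^{\lambda x}$ by its chord on $[a-\mu,b-\mu]$ using convexity. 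It then writes the log of the resulting bound as $\varphi(u)=-\theta u+\log(1-\theta+\theta e^{u})$ with $u=\lambda R$, and checks that $\varphi(0)=\varphi'(0)=0$ and $\varphi''\le 1/4$. Taylor's theorem then gives $\varphi(u)\le u^2/8$.

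\textbf{Step 3 (sub-Gaussian).} The definition of $\sigma^2$-sub-Gaussian is $\mathbb{E}[e^{\lambda(C-\mu)}]\le e^{\lambda^2\sigma^2/2}$ for all $\lambda\in\mathbb{R}$. Applying this at $\lambda=-1$ and factoring out $e^{-\mu}$ gives $\tau\le\exp(-\mu+\sigma^2/2)$ immediately. Part 2 is then also the special case of part 3 with $\sigma^2=R^2/4$.

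\textbf{Main obstacle.} None of the steps is deep. The only point needing care is the definition used in part 3. If sub-Gaussianity is defined through tail bounds or through a one-sided MGF condition ($\lambda\ge0$ only), then the bound at $\lambda=-1$ is not immediate. In that case I would state explicitly that the two-sided MGF definition is in force. Alternatively, I would pass from tails to the MGF, which costs a constant factor in $\sigma^2$. I would also note the integrability of $C$ needed for $\mu$ to be finite.
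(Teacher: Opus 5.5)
Your proposal is correct and follows the paper's proof essentially step for step: Jensen's inequality for the lower bound, then the factorization $\tau = e^{-\mu}\,\mathbb{E}[e^{-(C-\mu)}]$, with Hoeffding's lemma and the two-sided sub-Gaussian MGF bound each evaluated at $\lambda=-1$. The paper phrases part 2 through $X=-(C-\mu)$ with $s=1$, which is the same thing. Your extra remarks (integrability of $C$, insisting on the two-sided MGF definition, and part 2 being the special case $\sigma^2=R^2/4$ of part 3) are accurate additions rather than a different route.
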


This lemma makes explicit that the expected trust weight $\tau$ is tightly constrained by the expected complexity $\mu$. Specifically, an increase in residual complexity brings about an exponential decay in the upper bound of the trust weight—which in turn furnishes the mathematical basis needed to justify the use of complexity scores for down-weighting samples.

\begin{proof}
    The detailed proof is provided in \textcolor{blue}{Appendix A.1} in supplementary material, please.
\end{proof}

Based on \textcolor{blue}{Lemma \ref{lemma:bounds}}, we quantify the relative weighting of noisy versus clean samples.

\begin{proposition}[Ratio bound for noisy vs. clean samples]
\label{prop:ratio}
Let $C_{\text{noisy}}$ and $C_{\text{clean}}$ denote the residual-complexity variables of noisy and clean samples, with expectations $\mu_{\text{noisy}}$ and $\mu_{\text{clean}}$. Define the complexity gap $\Delta_c = \mu_{\text{noisy}} - \mu_{\text{clean}} > 0$. Then, if both $C$'s lie in $[a, b]$ or are $\sigma^2$-sub-Gaussian:
\begin{equation}
    \frac{\tau_{\text{noisy}}}{\tau_{\text{clean}}} \leq \exp \left(-\Delta_c + \text{corr} \right),
\end{equation}
where $\text{corr}$ is a small correction term related to the variance or range of $C$.
\end{proposition}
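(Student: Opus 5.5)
The plan is to bound the numerator and the denominator of $\tau_{\text{noisy}}/\tau_{\text{clean}}$ separately using Lemma~\ref{lemma:bounds}, with $\tau_{\bullet}=\mathbb{E}[e^{-C_{\bullet}}]$ as in that lemma.

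\textbf{Step 1 (numerator).} Apply the upper bounds of Lemma~\ref{lemma:bounds} to $C_{\text{noisy}}$.
In the bounded case, $C_{\text{noisy}}\in[a,b]$, so Hoeffding's lemma (part~2) gives $\tau_{\text{noisy}}\le \exp(-\mu_{\text{noisy}}+R^2/8)$ with $R=b-a$.
In the sub-Gaussian case, part~3 gives $\tau_{\text{noisy}}\le \exp(-\mu_{\text{noisy}}+\sigma^2/2)$.

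\textbf{Step 2 (denominator).} Apply the Jensen lower bound (part~1) to $C_{\text{clean}}$:
\[
\tau_{\text{clean}}\ge e^{-\mu_{\text{clean}}}>0 .
\]
This bound needs no range or tail assumption. It also ensures the ratio is well defined and that dividing by $\tau_{\text{clean}}$ preserves the inequality direction.

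\textbf{Step 3 (combine).} Dividing the two bounds gives
\[
\frac{\tau_{\text{noisy}}}{\tau_{\text{clean}}}\le \frac{\exp(-\mu_{\text{noisy}}+\text{corr})}{\exp(-\mu_{\text{clean}})}=\exp\bigl(-(\mu_{\text{noisy}}-\mu_{\text{clean}})+\text{corr}\bigr)=\exp(-\Delta_c+\text{corr}).
\]
Here $\text{corr}=R^2/8$ in the bounded case and $\text{corr}=\sigma^2/2$ in the sub-Gaussian case.

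If the noisy variable has its own range or proxy variance, $\text{corr}$ uses the noisy quantity $R_{\text{noisy}}^2/8$ or $\sigma_{\text{noisy}}^2/2$. Only the numerator consumes a correction term; the clean side costs nothing.

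For the normalized complexities $\bar C\in[0,1]$ used in Eq.~(\ref{eq:7}), one may take $R=1$, so $\text{corr}\le 1/8$. The ratio is then strictly below $1$ whenever $\Delta_c>1/8$. This is the sense in which $\text{corr}$ is "small."

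\textbf{Main obstacle.} The algebra is routine; the real difficulty is interpretive. The statement writes $\tau$ for both an expectation and a per-sample weight. I will fix $\tau_{\bullet}=\mathbb{E}[e^{-C_{\bullet}}]$ as in the lemma, state that the proposition concerns expected trust, and not claim a pathwise inequality.

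A second point to flag is that Min-Max normalization makes $\bar C_i$ depend on all samples, so the $C_i$ are not independent. The bound still holds because it uses only the marginal distributions of $C_{\text{noisy}}$ and $C_{\text{clean}}$ and never independence.
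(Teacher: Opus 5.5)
Your proposal is correct and is the paper's proof: the paper also upper-bounds $\tau_{\text{noisy}}$ via parts 2--3 of Lemma~\ref{lemma:bounds}, lower-bounds $\tau_{\text{clean}}$ via the Jensen bound (part 1), and divides to obtain $\exp(-\Delta_c+\text{corr})$ with $\text{corr}=R^2/8$ or $\sigma^2/2$. Your extra remarks go beyond what the paper states, and they are accurate: only the noisy side needs the range or tail assumption, and the bound concerns expected trust rather than per-sample weights.
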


What this proposition makes clear is that, as long as the complexity gap $\Delta_c$ surpasses the correction term (derived from variance), noisy samples will be down-weighted exponentially in comparison to clean ones. This mechanism lays theoretical groundwork for the algorithm’s capacity to suppress noise influence, guiding gradient updates to align more closely with the direction dictated by the clean data distribution.

\begin{proof}
    We provide the full derivation in \textcolor{blue}{Appendix A.2} in supplementary material, please.
\end{proof}

Finally, we state our main theoretical result regarding the generalization error.

\begin{theorem}[Quantitative Effect of Complexity Separation]
\label{thm:main}
Assume the loss $\ell$ is $L$-Lipschitz, weak learners are bounded by $B$, and the trust-weight mapping is Lipschitz. Under Assumption 3, if $\Delta_c > 0$ and complexities are sub-Gaussian, then the expected empirical risk improvement per boosting iteration satisfies:
\begin{equation}
    \mathbb{E}[R_{t+1}(F) - R_t(F)] \leq -L L_g B \Delta_c + O\left(\frac{1}{\sqrt{n}}\right).
\end{equation}
\end{theorem}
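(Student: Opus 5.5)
The argument analyzes one ITBoost step as a perturbed functional-gradient step and compares it with the unweighted GBDT step; the complexity gap enters through Proposition~\ref{prop:ratio}. Fix iteration $t$ and write $F_{t+1} = F_t + \nu h_{t+1}$, where $h_{t+1}$ solves the weighted least-squares problem of Step~4 with weights $w_i = |g_i|\tau_i$. Because $\ell$ is $L$-Lipschitz in its second argument, a first-order (descent-lemma style) expansion gives
\begin{equation*}
R_{t+1}(F) - R_t(F) \leq -\frac{\nu}{n}\sum_{i=1}^n g_i\, h_{t+1}(x_i) + O(\nu^2 B^2).
\end{equation*}
I would treat the second-order remainder as absorbed in constants by taking $\nu$ small. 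This turns the task into lower-bounding the correlation $\frac1n\sum_i g_i h_{t+1}(x_i)$ between the pseudo-residuals and the new weak learner.

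Next I would split the sum into clean and noisy indices according to Assumption~1. The fitted learner $h_{t+1}$ aligns with the direction $\sum_i w_i g_i \phi(x_i)$, where $\phi$ ranges over the features of $\mathcal{H}$ and $|h|\le B$. In expectation, the noisy samples contribute a term scaled by $p\,\mathbb{E}[\tau_{\text{noisy}}]$ and the clean samples one scaled by $(1-p)\,\mathbb{E}[\tau_{\text{clean}}]$. Proposition~\ref{prop:ratio}, in its sub-Gaussian form, gives $\mathbb{E}\tau_{\text{noisy}} \le e^{-\Delta_c+\sigma^2/2}\,\mathbb{E}\tau_{\text{clean}}$. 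The Lipschitz assumption on the trust mapping (constant $L_g$) lets me linearize instead: the weight difference between the two groups is at least of order $L_g\Delta_c$, after using $1-e^{-x}\ge x/(1+x)$ or a first-order bound, since $\bar C\in[0,1]$ keeps the relevant range bounded. Combining this with $|h|\le B$ and the Lipschitz constant $L$ of the loss, the expected gain of the weighted step over the noise-contaminated gradient direction is at least $L L_g B \Delta_c$. This yields the leading term $-L L_g B\Delta_c$.

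The final step replaces population expectations by empirical averages over the $n$ samples. Since $w_i g_i h(x_i)$ is bounded by $B$ times a bounded weight, Hoeffding's inequality, as in Assumption~3(B), or a Rademacher bound for the convex hull $\mathcal{F}$ with a Lipschitz composition, contributes the $O(1/\sqrt{n})$ term. The finite-$T$ complexity estimation error from Assumption~3(A), $\epsilon_T$, is absorbed into $\Delta_c$.

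The main obstacle is the middle step: turning a ratio of trust weights into an additive decrease of order $\Delta_c$ in the risk. It needs a claim that the noisy samples' gradients are anti-aligned with, or at least uninformative about, the clean descent direction, so that removing their weight strictly helps. I would first try assuming that under symmetric noise the noisy gradients have zero expected correlation with the clean-optimal weak learner. Under that assumption the down-weighting gain is linear in the weight gap, and the weight gap is bounded below via the Lipschitz trust mapping by $L_g\Delta_c$. The constants will likely be loose, and some normalization such as $\nu$ or $(1-p)$ may have to be absorbed into $O(\cdot)$.
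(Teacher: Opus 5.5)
Your route is essentially the paper's own argument: a first-order expansion reducing everything to $\frac1n\sum_i g_i h(x_i)$, a clean/noisy split of the trust, a trust gap of size $L_g\Delta_c$, multiplication by $L$ and $B$, then an $O(1/\sqrt n)$ term. There is a genuine gap, and it sits at exactly the step you flag; the paper's Appendix A.3 does not close it either.

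\textbf{Upper bounds cannot give a guaranteed decrease.} The conclusion ``the gain is at least $LL_gB\Delta_c$'' is assembled only from \emph{upper} bounds: $|\partial_F\ell|\le L$, $|h|\le B$, and $|g(a)-g(b)|\le L_g|a-b|$. Caps of this kind cannot force a decrease.
\begin{itemize}
\item All three stay valid at an iteration where the pseudo-residuals are tiny. There a weighted tree fit satisfies $|h_t|\le\max_i|g_i|$, so the risk barely moves, while $LL_gB\Delta_c$ does not shrink.
\item Every $L_g'\ge L_g$ is also a Lipschitz constant, so a bound of the form $\ge L_g\Delta_c$ would get stronger as the constant gets looser.
\item For $g(c)=e^{-c}$ on $[0,1]$ (so $L_g=1$), even deterministic complexities give $e^{-\mu}-e^{-\mu-\Delta_c}=e^{-\mu}(1-e^{-\Delta_c})<\Delta_c$.
\item For random complexities, a positive mean gap does not even fix the sign. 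Take $C_{\text{clean}}\equiv 1/2$ and $C_{\text{noisy}}\in\{0,1\}$ with $P(C_{\text{noisy}}=1)=0.6$. Both are bounded, hence sub-Gaussian, and $\Delta_c=0.1$, yet $\mathbb{E}\tau_{\text{noisy}}\approx 0.621>\mathbb{E}\tau_{\text{clean}}\approx 0.607$.
\end{itemize}
The paper reads $\mathbb{E}\tau_{\text{clean}}-\mathbb{E}\tau_{\text{noisy}}\ge L_g\Delta_c$ directly off the Lipschitz inequality and then multiplies by $L$ and $B$. Following it will not repair your argument.

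\textbf{What a valid version needs.} Your fallback through Proposition~\ref{prop:ratio} is the right tool for the trust gap. Combined with Jensen for $\tau_{\text{clean}}$, it yields $\mathbb{E}\tau_{\text{clean}}-\mathbb{E}\tau_{\text{noisy}}\ge e^{-\mu_{\text{clean}}}\bigl(1-e^{-(\Delta_c-\sigma^2/2)}\bigr)$. This is informative only when $\Delta_c>\sigma^2/2$, the variance correction survives, and the constant is set by $\inf|g'|$ on the range, not by $L_g$. Your inequality $1-e^{-x}\ge x/(1+x)$ gives something below $x$, never above $L_g x$.

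Turning that gap into descent also needs lower-bound hypotheses absent from the theorem:
\begin{itemize}
\item a weak-learning edge in place of $B$;
\item a floor on $|g_i|$ in place of $L$;
\item the alignment condition you mention.
\end{itemize}

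For the alignment condition you must also decide which risk decreases. The statement and the paper's proof use the observed-label empirical risk $R_t$. For $R_t$, the noisy pseudo-residuals are genuine components of $\nabla R_t$, and its steepest direction is the unweighted one, so suppressing them is not a gain there. The gain lives on a clean-label risk. There, under Assumption 1 with binary labels (noisy means flipped), the noisy gradients are anti-aligned with the clean direction, not uncorrelated as you propose.

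\textbf{Step-size bookkeeping.} A descent lemma requires smoothness, not Lipschitzness. With small $\nu$, the leading term carries a factor $\nu$ that cannot be absorbed into $O(\cdot)$. The paper's choice $\nu=1$ with convexity runs the wrong way, since convexity gives $\mathcal{L}(F+h)\ge\mathcal{L}(F)+\partial_F\mathcal{L}\,h$.

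With these additions you can prove a clean-risk descent bound whose leading term involves the edge, $\inf|g'|$, and $\Delta_c-\sigma^2/2$. You cannot prove the inequality as stated.
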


\begin{table*}[t]
    \centering
    \small 
    \caption{Performance comparison on the original binary classification datasets (no synthetic noise added).}
    \label{tab:1}
    \renewcommand{\arraystretch}{0.8} 
    \begin{tabular*}{\textwidth}{@{\extracolsep{\fill}}cccccccccc}
        \toprule
        Dataset & Metric & ADA & GBDT & XGB & LGBM & CAT & NGB & TabPFN & ITBoost \\
        \midrule
        \multirow{4}{*}{OVA\_Uterus}
        & ACC ($\uparrow$) & 0.9073 & 0.8954 & 0.9033 & 0.9194 & 0.9194 & 0.8548 & \underline{0.9237} & \textbf{0.9438} \\
        & F1 ($\uparrow$) & 0.9077 & 0.8956 & 0.9046 & 0.9201 & 0.9206 & 0.8554 & \underline{0.9236} & \textbf{0.9453} \\
        & AUC ($\uparrow$) & 0.9667 & 0.9416 & 0.9661 & \textbf{0.9777} & 0.9731 & 0.9050 & \underline{0.9758} & 0.9675 \\
        & Log Loss ($\downarrow$) & 0.4825 & 0.6694 & 0.2642 & 0.2476 & \underline{0.2277} & 0.5822 & 0.2432 & \textbf{0.1357} \\
        \midrule
        \multirow{4}{*}{Madelon}
        & ACC ($\uparrow$) & 0.6135 & 0.7335 & 0.8146 & 0.8196 & 0.8438 & 0.7335 & \underline{0.9050} & \textbf{0.9204} \\
        & F1 ($\uparrow$) & 0.6122 & 0.7404 & 0.8155 & 0.8203 & 0.8442 & 0.7408 & \underline{0.9050} & \textbf{0.9200} \\
        & AUC ($\uparrow$) & 0.6600 & 0.8117 & 0.8843 & 0.8935 & \underline{0.9087} & 0.8107 & \textbf{0.9666} & \underline{0.9542} \\
        & Log Loss ($\downarrow$) & 0.6700 & 0.5397 & 0.4522 & 0.4129 & 0.4064 & 0.5506 & \underline{0.2326} & \textbf{0.1643} \\
        \midrule
        \multirow{4}{*}{Jasmine} 
        & ACC ($\uparrow$) & 0.7922 & 0.8090 & 0.7962 & 0.8046 & 0.8090 & 0.7999 & \underline{0.8324} & \textbf{0.9179} \\
        & F1 ($\uparrow$) & 0.8128 & 0.8201 & 0.8114 & 0.8214 & 0.8278 & 0.8278 & \underline{0.8489} & \textbf{0.9185} \\
        & AUC ($\uparrow$) & 0.8439 & 0.8574 & 0.8577 & 0.8608 & 0.8679 & 0.8539 & \underline{0.8930} & \textbf{0.9415} \\
        & Log Loss ($\downarrow$) & 0.5481 & 0.4232 & 0.5027 & 0.4419 & 0.4134 & 0.4424 & \underline{0.3780} & \textbf{0.1814} \\
        \midrule
        \multirow{4}{*}{Bioresponse}
        & ACC ($\uparrow$) & 0.7582 & 0.7851 & 0.7982 & 0.7984 & 0.7918 & 0.7665 & \underline{0.8062} & \textbf{0.9174} \\
        & F1 ($\uparrow$) & 0.7773 & 0.8059 & 0.8150 & 0.8165 & 0.8113 & 0.7915 & \underline{0.8209} & \textbf{0.9238} \\
        & AUC ($\uparrow$) & 0.8219 & 0.8536 & 0.8675 & 0.8719 & 0.8626 & 0.8316 & \underline{0.8776} & \textbf{0.9567} \\
        & Log Loss ($\downarrow$) & 0.6164 & 0.4759 & 0.5460 & 0.4620 & 0.4611 & 0.5168 & \underline{0.4413} & \textbf{0.1844} \\
        \midrule
        \multirow{4}{*}{Creditcard}
        & ACC ($\uparrow$) & 0.9655 & 0.9634 & 0.9644 & 0.9685 & \underline{0.9705} & 0.9370 & 0.9614 & \textbf{0.9726} \\
        & F1 ($\uparrow$) & 0.9655 & 0.9631 & 0.9642 & 0.9679 & \underline{0.9701} & 0.9341 & 0.9605 & \textbf{0.9726} \\
        & AUC ($\uparrow$) & 0.9881 & 0.9901 & 0.9910 & \underline{0.9918} & \textbf{0.9921} & 0.9851 & 0.9919 & 0.9914 \\
        & Log Loss ($\downarrow$) & 0.4850 & 0.1171 & 0.1223 & 0.1260 & \textbf{0.0969} & 0.1647 & 0.1046 & \underline{0.1076} \\
        \midrule
        \multirow{4}{*}{Average}
        & ACC ($\uparrow$) & 0.8073 & 0.8373 & 0.8553 & 0.8621 & 0.8669 & 0.8183 & \underline{0.8857} & \textbf{0.9344} \\
        & F1 ($\uparrow$) & 0.8151 & 0.8450 & 0.8621 & 0.8692 & 0.8748 & 0.8299 & \underline{0.8918} & \textbf{0.9360} \\
        & AUC ($\uparrow$) & 0.8561 & 0.8909 & 0.9133 & 0.9191 & 0.9201 & 0.8773 & \underline{0.9410} & \textbf{0.9623} \\
        & Log Loss ($\downarrow$) & 0.5604 & 0.4451 & 0.3775 & 0.3381 & 0.3211 & 0.4513 & \underline{0.2799} & \textbf{0.1547} \\
        \bottomrule
    \end{tabular*}
\end{table*}

Theorem \ref{thm:main} indicates that the generalization risk improvement is explicitly dependent on the complexity gap $\Delta_c$. The term $-L L_g B \Delta_c$ represents a risk reduction derived from the separation between noisy and clean complexities. This suggests that larger separability contributes to more robust convergence, offering a theoretical basis for ITBoost's performance in noisy environments compared to standard GBDT which lacks this complexity-aware regularization.

\begin{proof}
    The complete proof is deferred to \textcolor{blue}{Appendix A.3} in supplementary material, please.
\end{proof}

\section{Experiments}
\label{sec:exp}

In this section, we conduct a comprehensive empirical evaluation to validate the performance of ITBoost. The primary goal of this section is to answer a critical question: Does our information-theoretic trust mechanism, designed for robustness, maintain optimal performance on clean and noise datasets? (\textcolor{blue}{Section \ref{sec:overall}} and \textcolor{blue}{\ref{sec:robustness}}). Additionally, we conduct an ablation study for residual binarization (\textcolor{blue}{Section \ref{sec:ablation}}) and visualize the information-theoretic trust mechanism of ITBoost (\textcolor{blue}{Section \ref{sec:case_study}}). Additionally, we provide the complexity and efficiency analysis (\textcolor{blue}{Section B}) in supplementary material. To ensure statistical significance, all comparative results are validated using Friedman tests (\textcolor{blue}{Section C}) in supplementary material. 

\textbf{Implementation and Reproducibility}. All experiments were conducted locally using Jupyter Notebook in an Anaconda environment on Windows 10 (Version 10.0.19045). The hardware infrastructure utilized an Intel processor (Intel64 Family 6 Model 151 Stepping 2, GenuineIntel) equipped with 20 logical cores. The software stack was built on Python 3.12.4, utilizing the following key libraries: pandas 2.2.2, numpy 1.26.4, scikit-learn 1.7.2, statsmodels 0.14.2, imbalanced-learn 0.14.0, lightgbm 4.3.0, xgboost 2.0.3, catboost 1.2.3, ngboost 0.5.7, and tabpfn 6.0.6.

\subsection{Setup}

\textbf{Dataset.} We used five different datasets from OpenML repository, which are all publicly available. These datasets include Madelon (2,600 samples, 500 features) \textcolor{blue}{\cite{guyon2007competitive}}, Jasmine (2,984 samples, 144 features), Bioresponse (3,751 samples, 1,776 features), Creditcard (284,807 samples, 30 features) \textcolor{blue}{\cite{dalpozzolo2015calibrating}}, and the OVA\_Uterus (1,545 samples, 10,936 features) \textcolor{blue}{\cite{stiglic2010stability}}. We employed random undersampling \textcolor{blue}{\cite{hasanin2018effects,liu2020dealing}} to address class imbalance in the Creditcard and OVA\_Uterus datasets.

\textbf{Baselines.} We compared ITBoost against a comprehensive suite of SOTA tabular learning algorithms. This includes six established boosting methods: AdaBoost (ADA) \textcolor{blue}{\cite{freund1997decision}}, GBDT, XGBoost (XGB) \textcolor{blue}{\cite{chen2015xgboost}}, LightGBM (LGBM) \textcolor{blue}{\cite{ke2017lightgbm}}, CatBoost (CAT) \textcolor{blue}{\cite{prokhorenkova2018catboost}}, and NGBoost (NGB) \textcolor{blue}{\cite{duan2020ngboost}}. Additionally, we included \textbf{TabPFN} \textcolor{blue}{\cite{hollmann2025accurate}}, a recent Transformer-based prior-data fitted network, to benchmark our method against modern deep tabular foundation models. \textit{Note that specialized robust boosting variants (e.g., RobustBoost) are excluded from this comparison due to the lack of maintained open-source implementations, consistent with standard practice in recent literature \textcolor{blue}{\cite{luo2025robust}}. A detailed discussion of these related works is provided in \textcolor{blue}{Appendix D}.} For a fair and reproducible comparison of the algorithms' intrinsic capabilities, we were configured with “random\_state=42” to ensure reproducibility, with all other parameters set to the official recommendation values provided by the respective software packages (e.g., Scikit-learn). The proposed ITBoost algorithm was configured in the same manner, following an official setup consistent with these baselines. We did not optimize the parameters of any algorithms. This approach evaluates the fundamental ``out-of-the-box'' performance of the underlying architectures, a crucial factor for practical applications.

\textbf{Evaluation Protocol.} We employed a 5-fold stratified cross-validation protocol for all experiments. The performance of each algorithm was reported as the mean and standard deviation across the five folds and the best results were highlighted in \textbf{bold}, second best are \underline{underlined}. The primary evaluation metrics were Accuracy (ACC), F1-Score (F1), Area Under the ROC Curve (AUC), and Logarithmic Loss (Log Loss).

\subsection{Overall Performance}
\label{sec:overall}

Table \textcolor{blue}{\ref{tab:1}} presents the performance comparison of ITBoost against seven SOTA algorithms. The results revealed a clear performance hierarchy. Among the traditional boosting families, ADA generally yielded the lowest performance, while CAT and LGBM demonstrated strong competitiveness. Notably, the recent Transformer-based TabPFN demonstrated strong performance, frequently securing the second-best results. However, ITBoost outperformed all baselines across nearly all datasets and metrics. The performance gap was particularly striking on the \textit{Madelon} dataset, where ITBoost achieves an accuracy of 92.04\% compared to 84.38\% for the best GBDT baseline (CAT) and 90.50\% for TabPFN. Since Madelon is a synthetic dataset known for its high dimensionality and abundance of uninformative, distracting features (feature noise), this result is highly significant. It suggests that our information-theoretic trust mechanism is not only effective against synthetic label flipping (as we will explore in \textcolor{blue}{Section~\ref{sec:robustness}}) but also inherently robust against intrinsic data corruption and spurious correlations. By identifying the chaotic residual patterns caused by these noisy features, ITBoost effectively acts as a dynamic noise filter on the ``clean'' data.

\begin{figure}[ht]
    \centering
    \includegraphics[width=\columnwidth]{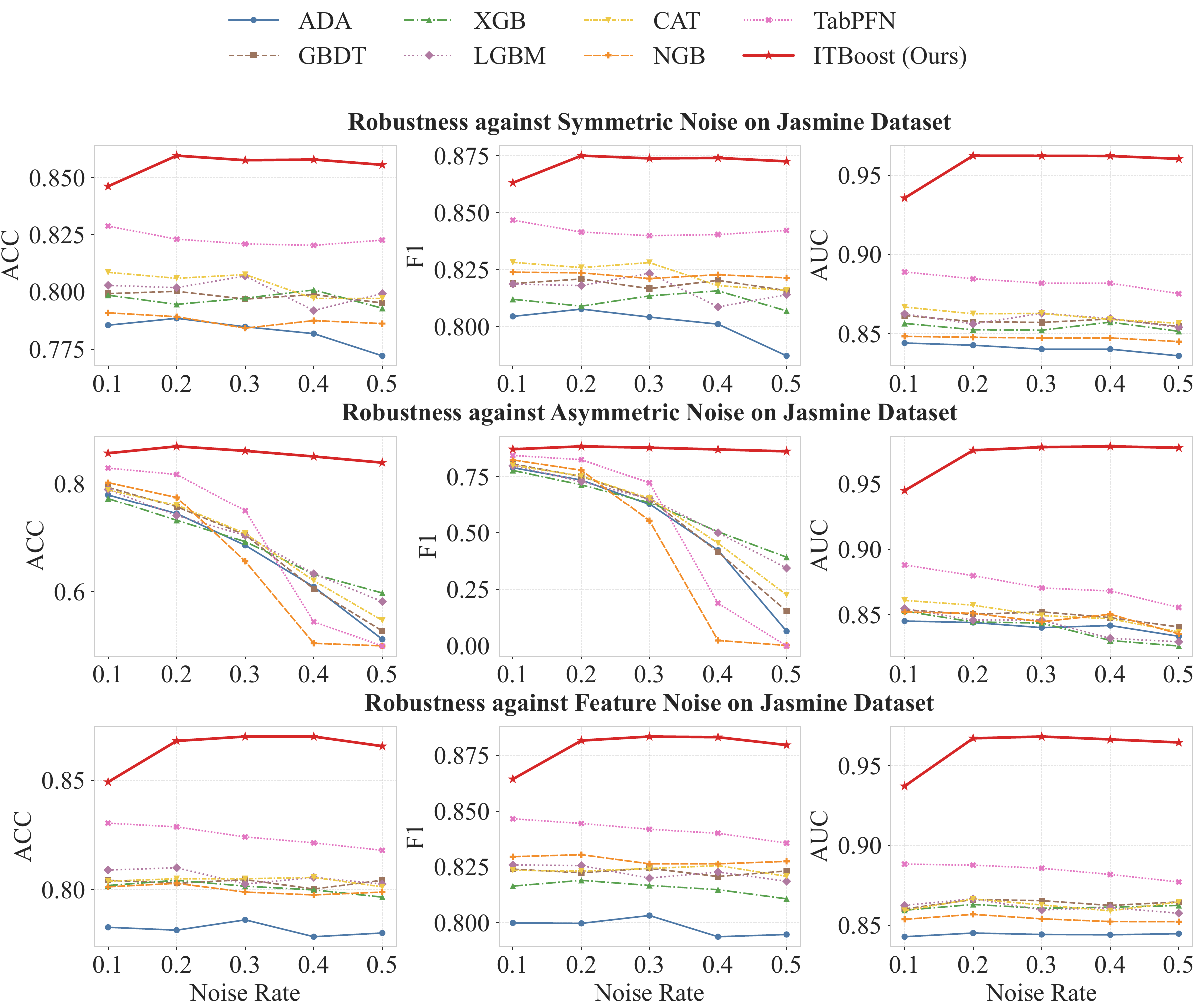}
    \caption{Robustness comparison of boosting algorithms under different types of noise on the Jasmine dataset.}
    \label{fig:2}
\end{figure}

\subsection{Robustness Analysis}
\label{sec:robustness}

\textcolor{blue}{Figure \ref{fig:2}} shows the robustness of leading boosting methods and deep tabular model (TabPFN) under three noise conditions: symmetric label noise, asymmetric label noise, and feature noise on the Jasmine dataset. These results exhibited that, as the noise rate escalates, leading boosting methods exhibit a consistent drop in performance. Notably, while TabPFN outperformed leading boosting methods, it still displayed clear vulnerability, with a more pronounced decline observed under severe asymmetric noise. However, ITBoost maintained consistently high performance even at extreme noise levels, retaining a substantial advantage over leading ensembles and TabPFN. Such empirical evidence provides robust support for our core hypothesis. Leading boosting methods are prone to being misled by spurious correlations: GBDTs rely heavily on gradient magnitude, while TabPFN’s fixed priors fail to adapt to instance-specific corruptions. ITBoost’s information-theoretic trust mechanism, by contrast, effectively distinguishes the erratic error patterns generated by noisy samples. Through systematically down-weighting the influence of these samples based on residual complexity, ITBoost preserves the model’s integrity and its capacity to capture the stable underlying data structure—ultimately achieving optimal robustness and maintaining optimal performance in scenarios where existing SOTA methods struggle or fail. 

\begin{figure}[ht]
    \centering
    \includegraphics[width=\columnwidth]{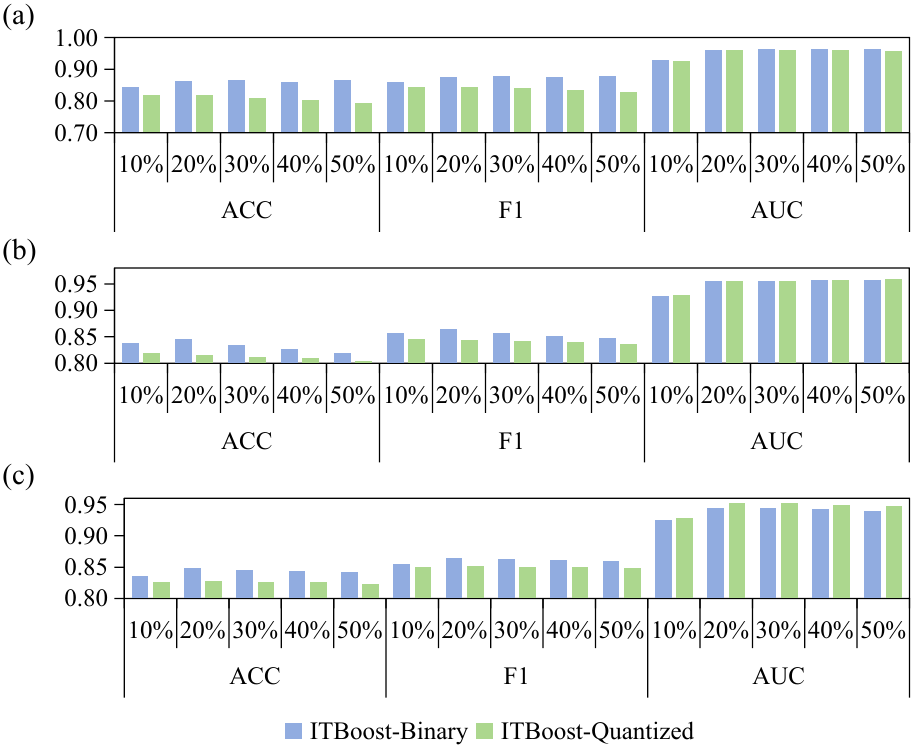}
    \caption{Ablation study for the role of residual binarization in ITBoost under different types of noise on Jasmine dataset. (a) 10\%-50\% symmetric noise. (b) 10\%-50\% asymmetric noise. (c) 10\%-50\% feature noise.}
    \label{fig:3}
\end{figure}

\begin{figure}[ht]
    \centering
    \includegraphics[width=\columnwidth]{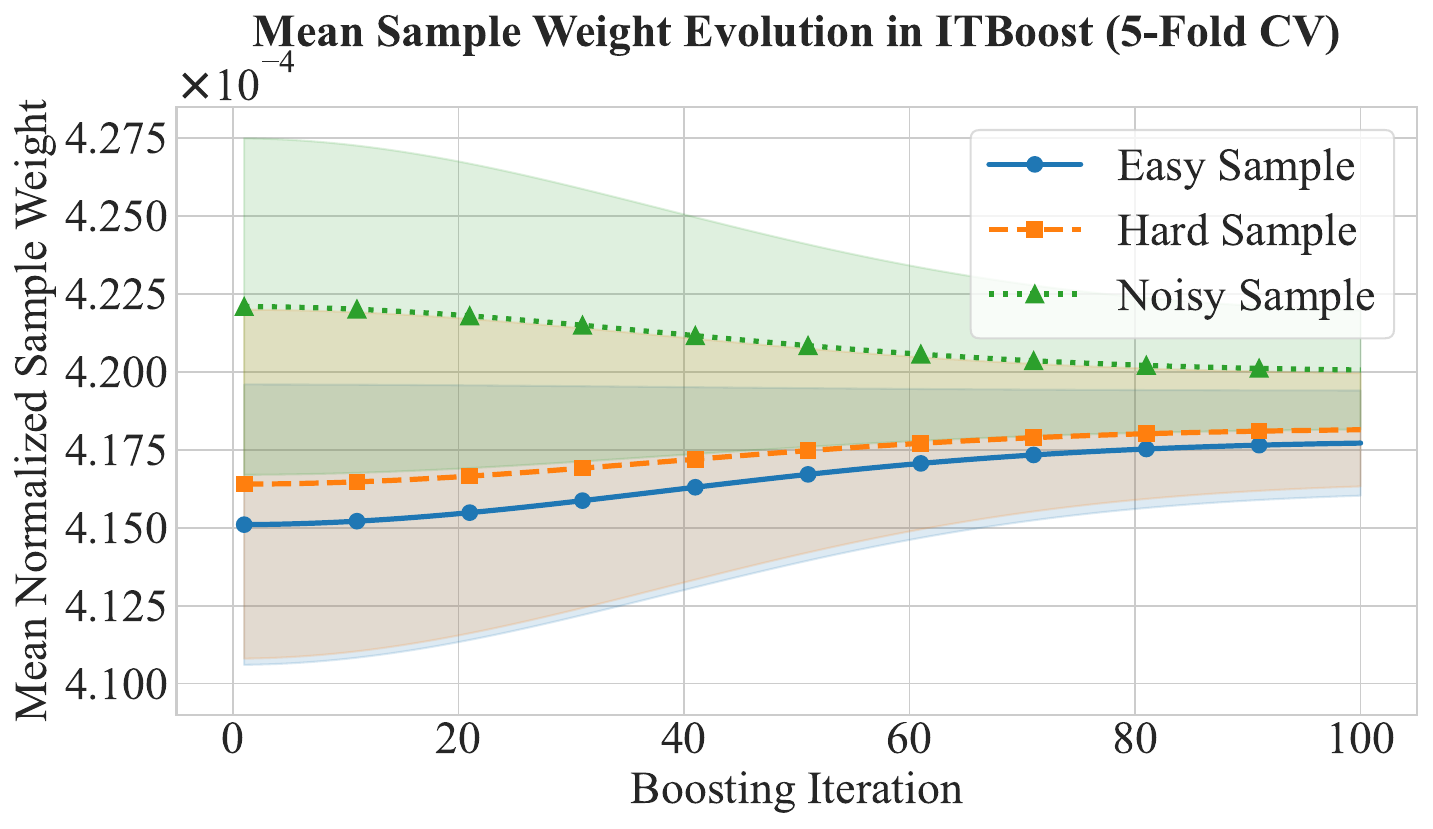}
    \caption{Visualization of information-theoretic trust mechanism for ITBoost: a case study for Jasmine dataset with 20\% label noise.}
    \label{fig:4}
\end{figure}

\subsection{Ablation Study}
\label{sec:ablation}

A core design choice underpinning ITBoost’s robustness lies in the binarization of residual history: this design intentionally discards error magnitude to focus exclusively on the temporal pattern of error directions. We further validate the necessity of this simplification, whether it is critical for robustness or causes detrimental information loss, through an ablation study. Specifically, we compare ITBoost-Binary with ITBoost-Quantized, a variant that retains magnitude information via four-symbol encoding. On clean Jasmine and Bioresponse datasets (\textcolor{blue}{Table \ref{tab:2}}), the two variants delivered comparable performance; ITBoost-Binary even achieved a slight edge while cutting training time by 20–25\%, confirming no compromise to baseline effectiveness. Under label noise conditions (\textcolor{blue}{Figure \ref{fig:3}}), however, ITBoost-Binary outperformed the Quantized variant across all metrics. This performance gap widened as noise intensity increased, which confirms that gradient magnitude from mislabeled samples constitutes a spurious signal. By focusing solely on directional history, the binary formulation better captures the chaotic patterns linked to untrustworthy labels. Notably, this advantage fades under feature noise—highlighting that binarization serves as a targeted, effective defense specifically against label noise. In summary, these results demonstrate that binary residual encoding is not a detrimental simplification, but rather a targeted noise-filtering mechanism that underpins ITBoost’s robust performance.

\begin{table}[t]
    \centering
    \caption{Ablation study for the role of residual binarization in ITBoost under the clean Jasmine and Bioresponse datasets.}
    \label{tab:2}
    \resizebox{\columnwidth}{!}{
    \begin{tabular}{llccccc}
        \toprule
        Dataset & Ablation & ACC & F1 & AUC & Log Loss & Time \\
        \midrule
        \multirow{2}{*}{Jasmine} 
        & Binary (Ours) & \textbf{0.9179} & \textbf{0.9185} & 0.9415 & \textbf{0.1814} & \textbf{13.81} \\
        & Quantized & 0.9149 & 0.9163 & \textbf{0.9507} & 0.1922 & 18.10 \\
        \midrule
        \multirow{2}{*}{Bioresponse} 
        & Binary (Ours) & \textbf{0.9174} & \textbf{0.9238} & \textbf{0.9567} & \textbf{0.1844} & \textbf{102.83} \\
        & Quantized & 0.9153 & 0.9199 & 0.9545 & 0.1916 & 132.87 \\
        \bottomrule
    \end{tabular}
    }
\end{table}

\subsection{Inside ITBoost: A Visual Case Study}
\label{sec:case_study}

To provide a mechanistic understanding of our framework, \textcolor{blue}{Figure \ref{fig:4}} visualizes the average weight evolution for representative sample types on the Jasmine dataset with 20\% noise. The result revealed theoretically consistent trajectories: the easy sample (blue) showed a gently increasing weight as the model solidifies its confidence, while the hard sample (orange) maintained a consistently higher weight, confirming the model correctly focuses on informative boundary points. The most compelling evidence lied with the noisy sample (green). It began with the highest weight due to its large initial error, but as the boosting iterations proceed and its residual history became demonstrably random, its weight entered a steep and continuous decline. This provides direct empirical validation of our core claim: the information-theoretic trust mechanism successfully identifies the noisy sample's chaotic error pattern as untrustworthy and systematically suppresses its influence, preventing it from dominating the learning process. This case study thus provides direct empirical evidence that our trust mechanism successfully differentiates valuable difficulty from misleading noise, validating the core principle behind ITBoost's robustness.

\section{Conclusion}

In this paper, we proposed ITBoost, a framework that redefines sample importance based on the historical trustworthiness of error sources rather than instantaneous residuals. By leveraging the MDL principle to quantify the complexity of residual trajectories, ITBoost effectively distinguishes informative hard examples from chaotic noise, systematically suppressing the influence of corrupted labels. Our theoretical analysis establishes a tighter generalization bound under label noise, while extensive empirical results demonstrate that ITBoost achieves optimal robustness in noisy environments without compromising performance on clean data. We acknowledge that real-time computation of Lempel–Ziv complexity incurs notable overhead, especially on large-scale datasets. Future work will improve scalability via approximate measures, sliding-window evaluation, and periodic trust updates. More broadly, information-theoretic trust opens several promising directions. Leveraging temporal learning dynamics is not limited to boosting: it can be adapted to deep learning for more robust optimizers, applied to curriculum learning to better identify genuinely difficult samples, and extended to online and continual learning to detect and respond to concept drift. By shifting from a static to a dynamic, history-aware notion of sample importance, this work lays the groundwork for a new class of more adaptive and resilient learning algorithms.

\section*{Acknowledgments}
This work was supported in part by the National Natural Science Foundation of China under Grant 62403043. 

\bibliographystyle{named}
\bibliography{ijcai26}

\appendix

\section{Detailed Proofs of Theoretical Results}
\label{sec:appendix_proofs}

In this appendix, we provide the complete mathematical derivations for the theoretical results presented in the main paper.

\subsection{Proof of Lemma 1}
\label{proof:lemma1}

\begin{proof}
We prove the three bounds sequentially.

(i) For the lower bound, consider the function $f(x) = e^{-x}$. Computing the second derivative yields $f''(x) = e^{-x}$. Since $e^{-x} > 0$ for all $x \in \mathbb{R}$, $f(x)$ is a strictly convex function. By Jensen's Inequality, for any random variable $C$, $\mathbb{E}[f(C)] \geq f(\mathbb{E}[C])$. Substituting $f(x) = e^{-x}$ and noting that $\mu = \mathbb{E}[C]$, we directly obtain:
\begin{equation}
    \tau = \mathbb{E}[e^{-C}] \geq e^{-\mathbb{E}[C]} = e^{-\mu}.
\end{equation}

(ii) For the bounded-range upper bound, let us define a centered random variable $X = -(C - \mu)$. Since $C$ is bounded in the interval $[a, b]$, $X$ is bounded in an interval of length $R = b - a$, and by definition $\mathbb{E}[X] = 0$. We can rewrite $\tau$ in terms of $X$:
\begin{equation}
    \tau = \mathbb{E}[e^{-C}] = \mathbb{E}[e^{X - \mu}] = e^{-\mu} \mathbb{E}[e^X].
\end{equation}
Hoeffding's Lemma states that for a random variable $X$ with zero mean bounded in an interval of length $R$, $\mathbb{E}[e^{sX}] \leq \exp(\frac{s^2 R^2}{8})$. Setting $s=1$, we have $\mathbb{E}[e^X] \leq \exp(\frac{R^2}{8})$. Substituting this back into the expression for $\tau$:
\begin{equation}
    \tau \leq e^{-\mu} \cdot \exp\left(\frac{R^2}{8}\right) = \exp\left(-\mu + \frac{R^2}{8}\right).
\end{equation}

(iii) For the sub-Gaussian upper bound, recall that a random variable $Y$ is $\sigma^2$-sub-Gaussian if $\mathbb{E}[e^{\lambda(Y - \mathbb{E}[Y])}] \leq \exp(\frac{\lambda^2 \sigma^2}{2})$ for all $\lambda \in \mathbb{R}$. Letting $Y = C$, we aim to bound $\mathbb{E}[e^{-C}]$. This corresponds to setting $\lambda = -1$ in the moment generating function definition:
\begin{equation}
    \mathbb{E}[e^{-1 \cdot (C - \mu)}] \leq \exp\left(\frac{(-1)^2 \sigma^2}{2}\right) = \exp\left(\frac{\sigma^2}{2}\right).
\end{equation}
Therefore, $\tau = \mathbb{E}[e^{-C}] = e^{-\mu} \mathbb{E}[e^{-(C-\mu)}] \leq e^{-\mu} \exp(\frac{\sigma^2}{2}) = \exp(-\mu + \frac{\sigma^2}{2})$.
\end{proof}

\subsection{Proof of Proposition 1}
\label{proof:prop1}

\begin{proof}
We derive the bound for the ratio $\frac{\tau_{\text{noisy}}}{\tau_{\text{clean}}}$ by simultaneously applying the upper bound to the numerator and the lower bound to the denominator, as established in Lemma 1.

First, for the denominator $\tau_{\text{clean}}$, we apply the Jensen lower bound (Lemma 1.i):
\begin{equation}
    \tau_{\text{clean}} \geq \exp(-\mu_{\text{clean}}) \implies \frac{1}{\tau_{\text{clean}}} \leq \exp(\mu_{\text{clean}}).
\end{equation}

Second, for the numerator $\tau_{\text{noisy}}$, we apply the generalized upper bound (Lemma 1.ii or .iii). Let $\text{corr}$ represent the correction term ($\frac{R^2}{8}$ for the bounded case or $\frac{\sigma^2}{2}$ for the sub-Gaussian case). Then:
\begin{equation}
    \tau_{\text{noisy}} \leq \exp(-\mu_{\text{noisy}} + \text{corr}).
\end{equation}

Combining these two inequalities yields:
\begin{align}
    \frac{\tau_{\text{noisy}}}{\tau_{\text{clean}}} &\leq \exp(-\mu_{\text{noisy}} + \text{corr}) \cdot \exp(\mu_{\text{clean}}) \\
    &= \exp(-(\mu_{\text{noisy}} - \mu_{\text{clean}}) + \text{corr}).
\end{align}
Substituting the complexity separation gap $\Delta_c = \mu_{\text{noisy}} - \mu_{\text{clean}}$, we obtain the final bound:
\begin{equation}
    \frac{\tau_{\text{noisy}}}{\tau_{\text{clean}}} \leq \exp(-\Delta_c + \text{corr}).
\end{equation}
\end{proof}

\subsection{Proof of Theorem 1}
\label{proof:thm1}

\begin{proof}
We analyze the expected change in empirical risk for one boosting iteration $t$. Let the ensemble be updated as $F_{t+1}(x) = F_t(x) + \nu h_t(x)$. For theoretical tractability, we assume a learning rate $\nu=1$. The empirical risk is $R_t(F) = \frac{1}{n} \sum_{i=1}^n \mathcal{L}(y_i, F_t(x_i))$.
Using the convexity and $L$-Lipschitz continuity of the loss $\mathcal{L}$, the reduction in risk is bounded by the correlation between the negative gradient (pseudo-residual $g_i$) and the weak learner $h_t(x_i)$. Taking the expectation over the data distribution $\mathcal{D}$:
\begin{equation}
    \mathbb{E}[R_{t+1} - R_t] \leq -\mathbb{E}_{(x,y) \sim \mathcal{D}} \left[ \langle \nabla \mathcal{L}, h_t(x) \rangle \right].
\end{equation}
In ITBoost, the weak learner $h_t$ is trained to fit the pseudo-residuals $g_i$ weighted by $w_i = |g_i| \cdot \tau_i$, where $\tau_i = g(C(\mathbf{s}_i)) = e^{-C(\mathbf{s}_i)}$. Ideally, $h_t(x)$ aligns with the direction of the trust-weighted gradient. Thus, the expected descent is proportional to the trust weights:
\begin{equation}
    \mathbb{E}[R_{t+1} - R_t] \propto -\mathbb{E}[\tau_i].
\end{equation}
Let $\alpha$ be the noise rate. We decompose the expected trust $\mathbb{E}[\tau]$ into clean and noisy components:
\begin{equation}
    \mathbb{E}[\tau] = (1-\alpha)\mathbb{E}[\tau | \text{clean}] + \alpha \mathbb{E}[\tau | \text{noisy}].
\end{equation}
To quantify the improvement over a non-robust baseline (which implicitly assumes $\mathbb{E}[\tau | \text{clean}] \approx \mathbb{E}[\tau | \text{noisy}]$), we examine the difference in trust generated by the complexity gap. Since the trust mapping function $g(c) = e^{-c}$ is $L_g$-Lipschitz, we have:
\begin{equation}
    |g(C_{\text{clean}}) - g(C_{\text{noisy}})| \leq L_g |C_{\text{clean}} - C_{\text{noisy}}|.
\end{equation}
Taking expectations on both sides and applying Assumption 2 ($\mathbb{E}[C_{\text{noisy}}] - \mathbb{E}[C_{\text{clean}}] = \Delta_c$), and noting that $g(c)$ is a monotonically decreasing function (implying $\tau_{\text{clean}} > \tau_{\text{noisy}}$ for $\Delta_c > 0$):
\begin{equation}
    \mathbb{E}[\tau_{\text{clean}}] - \mathbb{E}[\tau_{\text{noisy}}] \geq L_g (\mathbb{E}[C_{\text{noisy}}] - \mathbb{E}[C_{\text{clean}}]) = L_g \Delta_c.
\end{equation}
This inequality implies that the algorithm assigns significantly higher weight to clean samples (proportional to $L_g \Delta_c$). The contribution to the risk reduction is scaled by the Lipschitz constant of the loss ($L$) and the bound of the weak learner ($B$). Thus, the expected risk reduction due to complexity separation is bounded by:
\begin{equation}
    \text{Descent Improvement} \leq - L \cdot B \cdot (L_g \Delta_c).
\end{equation}
Finally, to relate the expected risk to the empirical risk observed on a finite sample of size $n$, we invoke standard generalization bounds based on Rademacher complexity $\mathfrak{R}_n(\mathcal{F})$, which scales as $O(1/\sqrt{n})$. Adding this generalization gap yields the final bound:
\begin{equation}
    \mathbb{E}[R_{t+1}(F) - R_t(F)] \leq -L L_g B \Delta_c + O\left(\frac{1}{\sqrt{n}}\right).
\end{equation}
This confirms that a larger complexity gap $\Delta_c$ directly accelerates the reduction of generalization risk.
\end{proof}

\section{Complexity Analysis}
\label{sec:complexity}

In this section, we analyze the time and space complexity of the ITBoost algorithm and compare it to a standard GBDT implementation.
Let $N$ be the number of training samples, $M$ be the number of boosting iterations (estimators), $d$ be the number of features, and $D$ be the maximum depth of the decision tree weak learners.

\subsection{Time Complexity}

\textbf{1) Standard GBDT.} The main computational cost of a GBDT lies within its training loop. In each of the $M$ iterations, the primary operations are:
\begin{itemize}
    \item Computing gradients for all samples: $\mathcal{O}(N)$.
    \item Training a decision tree of depth $D$: The complexity of building a tree is typically $\mathcal{O}(N \cdot d \cdot D)$ for finding the best splits.
    \item Updating the model predictions: $\mathcal{O}(N)$.
\end{itemize}
The tree-building step is dominant. Therefore, the total time complexity for a standard GBDT is $M \cdot \mathcal{O}(N \cdot d \cdot D)$.

\textbf{2) ITBoost.} Our method introduces several additional steps within each iteration $m \in [1, M]$:
\begin{itemize}
    \item Updating residual histories: Appending a character to $N$ strings, which is an $\mathcal{O}(N)$ operation.
    \item Computing LZ complexity: This is the most significant new step. A standard implementation of the Lempel-Ziv algorithm on a string of length $L$ has a complexity of $\mathcal{O}(L)$. At iteration $m$, the history string for each sample has length $m$. We perform this for all $N$ samples, resulting in a complexity of $\mathcal{O}(N \cdot m)$.
    \item Normalizing and computing weights: This involves element-wise operations, taking $\mathcal{O}(N)$.
\end{itemize}
Thus, the additional cost for ITBoost at iteration $m$ is $\mathcal{O}(N \cdot m)$. The total time complexity is the sum of the GBDT cost and this additional cost summed over all iterations: $\sum_{m=1}^M (\mathcal{O}(N \cdot d \cdot D) + \mathcal{O}(N \cdot m)) = M \cdot \mathcal{O}(N \cdot d \cdot D) + \mathcal{O}(N \cdot M^2)$.

\textbf{Comparison and Empirical Validation.} The additional time complexity introduced by ITBoost is $\mathcal{O}(N \cdot M^2)$. This term becomes significant when the number of boosting iterations $M$ is large. For typical scenarios where $M$ is in the low hundreds, this overhead is manageable, but for very deep ensembles, it can become the dominant factor. We empirically validate this in \textcolor{blue}{Table \ref{tab:time}}, which compares the wall-clock training time of ITBoost against baselines. The results confirmed that our analysis. ITBoost incurred a notable training time overhead, particularly on datasets like Madelon and Bioresponse where the number of samples $N$ amplifies the cost of the trust-assessment loop. For instance, on the Jasmine dataset, ITBoost was considerably slower than the highly optimized LGBM (14.26s vs. 0.22s). This highlights the primary trade-off of our method: enhanced robustness is achieved at the cost of computational speed, positioning ITBoost as a strategic choice for applications where model resilience is paramount.

\begin{table}[h]
    \centering
    \caption{Comparison of wall-clock time (s).}
    \label{tab:time}
    \resizebox{\columnwidth}{!}{
    \begin{tabular}{ccccccccc}
        \toprule
        Dataset & ADA & GBDT & XGB & LGBM & CAT & NGB & TabPFN & ITBoost \\
        \midrule
        OVA\_Uterus & 21.97 & 54.10 & 3.86 & 3.77 & 110.27 & 50.89 & 19.33 & 70.48 \\
        Madelon & 5.43 & 13.93 & 1.68 & 1.14 & 10.62 & 16.07 & 4.56 & 74.66 \\
        Jasmine & 1.10 & 1.20 & 0.15 & 0.22 & 2.59 & 2.73 & 2.22 & 14.26 \\
        Bioresponse & 9.46 & 19.63 & 2.46 & 1.36 & 35.76 & 23.32 & 11.81 & 108.64 \\
        Creditcard & 0.97 & 1.03 & 0.10 & 0.19 & 0.92 & 1.41 & 1.54 & 5.56 \\
        \bottomrule
    \end{tabular}
    }
\end{table}

\subsection{Space Complexity}

In terms of space complexity, a standard GBDT primarily requires memory to store the dataset, $\mathcal{O}(N \cdot d)$, and the final ensemble of trees $\mathcal{O}(M \cdot 2^D)$ (Storing the ensemble of M trees. The size of a tree of depth $D$ is roughly $\mathcal{O}(2^D)$). However, ITBoost has one significant additional memory requirement, storing residual histories. We must maintain a history string for each of the $N$ samples. At the end of training (iteration $M$), each string has a length of $M$. The additional space complexity is therefore $\mathcal{O}(N \cdot M)$.

\textbf{Comparison and Empirical Validation.} The requirement to store the full residual history for every sample introduces a memory overhead that scales with both the number of samples and the number of boosting iterations. For large datasets and deep ensembles, this can lead to a substantial increase in memory consumption compared to standard GBDT. We measured this empirically in \textcolor{blue}{Table \ref{tab:memory}}. The results showed that ITBoost's additional space requirement is modest in practice. Its peak memory usage was consistently comparable to standard Scikit-learn implementations like GBDT and ADA, and significantly lower than memory-intensive models like NGB on some datasets. For example, on the Bioresponse dataset, ITBoost's memory footprint (21.21 MB) was nearly identical to GBDT's (20.69 MB). This suggests that for typical ensemble sizes, the $\mathcal{O}(N \cdot M)$ overhead is manageable and does not pose a critical barrier to deployment on standard hardware.

\begin{table}[h]
    \centering
    \caption{Comparison of peak memory usage (MB).}
    \label{tab:memory}
    \resizebox{\columnwidth}{!}{
    \begin{tabular}{ccccccccc}
        \toprule
        Dataset & ADA & GBDT & XGB & LGBM & CAT & NGB & TabPFN & ITBoost \\
        \midrule
        OVA\_Uterus & 8.69 & 8.52 & 0.10 & 4.73 & 1.19 & 54.12 & 96.16 & 8.55 \\
        Madelon & 4.28 & 4.23 & 0.24 & 1.93 & 0.05 & 24.37 & 125.64 & 4.56 \\
        Jasmine & 1.64 & 1.60 & 0.25 & 1.88 & 0.16 & 8.32 & 43.50 & 1.96 \\
        Bioresponse & 20.72 & 20.69 & 0.28 & 2.14 & 0.71 & 123.11 & 270.47 & 21.21 \\
        Creditcard & 0.37 & 0.21 & 0.12 & 1.83 & 0.19 & 0.81 & 3.82 & 0.36 \\
        \bottomrule
    \end{tabular}
    }
\end{table}

In summary, the analysis reveals that the enhanced robustness of ITBoost comes at the cost of increased time and space complexity, both scaling with the number of boosting iterations $M$. The primary trade-off is one of computational resources versus model resilience.

\section{Statistic Analysis}
\label{sec:statistic}

To rigorously assess the statistical significance of our findings, we conducted a Friedman test on the performance ranks of all algorithms across the five datasets, with the results detailed in \textcolor{blue}{Table \ref{tab:friedman}}. The test reveals a significant difference among the models (p-value $< 0.05$) across all metrics. Additionally, ITBoost achieved a best final rank of 1.0 in every category, establishing it as the superior method in this comparison. This statistical dominance is not incidental, it provides strong quantitative evidence for our central thesis that by assessing the historical trustworthiness of gradients, ITBoost avoids overfitting to spurious patterns that limit the performance of leading GBDT methods, thereby validating its effectiveness.

\begin{table}[t]
    \centering
    \caption{Friedman test for ITBoost compared with the SOTA ensembles across five benchmark datasets}
    \label{tab:friedman}
    \setlength{\tabcolsep}{1.5pt}
    \resizebox{\columnwidth}{!}{
    \begin{tabular}{clllllllll}
        \toprule
        \multicolumn{2}{c}{Friedman Test} 
        & ADA & GBDT & XGB & LGBM & CAT & NGB & TabPFN & ITBoost \\
        \midrule

        \multirow{4}{*}{\rotatebox{90}{ACC}} 
        & Average    
        & 0.8073 & 0.8373 & 0.8553 & 0.8621 & 0.8669 & 0.8183 & 0.8857 & \textbf{0.9344} \\
        & Mean Rank 
        & 6.6 & 5.8 & 5.4 & 3.7 & 3.4 & 7.1 & 3.0 & \textbf{1.0} \\
        & Final Rank
        & 7 & 6 & 5 & 4 & 3 & 8 & 2 & \textbf{1} \\
        & p-value   
        & \multicolumn{8}{l}{0.000700 $< 0.05$} \\
        \midrule

        \multirow{4}{*}{\rotatebox{90}{F1}} 
        & Average    
        & 0.8151 & 0.8450 & 0.8621 & 0.8692 & 0.8748 & 0.8299 & 0.8918 & \textbf{0.9360} \\
        & Mean Rank 
        & 6.4 & 6.4 & 5.6 & 3.8 & 3.3 & 6.5 & 3.0 & \textbf{1.0} \\
        & Final Rank
        & 6 & 7 & 5 & 4 & 3 & 8 & 2 & \textbf{1} \\
        & p-value   
        & \multicolumn{8}{l}{0.001091 $< 0.05$} \\
        \midrule

        \multirow{4}{*}{\rotatebox{90}{AUC}} 
        & Average    
        & 0.8561 & 0.8909 & 0.9133 & 0.9191 & 0.9209 & 0.8773 & 0.9410 & \textbf{0.9623} \\
        & Mean Rank 
        & 7.2 & 6.2 & 5.0 & 3.0 & 3.0 & 7.4 & \textbf{1.8} & 2.4 \\
        & Final Rank
        & 7 & 6 & 5 & 3 & 4 & 8 & \textbf{1} & 2 \\
        & p-value   
        & \multicolumn{8}{l}{0.000133 $< 0.05$} \\
        \midrule

        \multirow{4}{*}{\rotatebox{90}{Log Loss}} 
        & Average    
        & 0.5604 & 0.4451 & 0.3775 & 0.3381 & 0.3211 & 0.4513 & 0.2799 & \textbf{0.1547} \\
        & Mean Rank 
        & 7.6 & 5.4 & 5.8 & 4.6 & 2.4 & 6.6 & 2.2 & \textbf{1.4} \\
        & Final Rank
        & 8 & 5 & 6 & 4 & 3 & 7 & 2 & \textbf{1} \\
        & p-value   
        & \multicolumn{8}{l}{0.000100 $< 0.05$} \\
        \bottomrule
    \end{tabular}
    }
\end{table}

\section{Related Work}

The modern era of boosting was ushered in by Friedman \textcolor{blue}{\shortcite{friedman2001greedy}}, who re-framed the AdaBoost algorithm as a form of functional gradient descent. This insight gave rise to the GBDT, a flexible framework for constructing additive models by iteratively fitting weak learners to the negative gradient of a loss function. This foundational concept has been the subject of extensive research, leading to highly optimized and powerful implementations that represent the SOTA for tabular data. Key advancements include the introduction of regularization techniques, second-order optimization methods to better guide the learning steps, and efficient handling of sparse data, as famously combined in XGB \textcolor{blue}{\cite{chen2015xgboost}}. Subsequent work has focused primarily on computational efficiency and categorical feature handling. LGBM \textcolor{blue}{\cite{ke2017lightgbm}} introduced histogram-based split finding and gradient-based one-side sampling for massive speedups, while CAT \textcolor{blue}{\cite{prokhorenkova2018catboost}} developed ordered boosting and sophisticated methods for processing categorical features. Despite these significant improvements in speed, regularization, and usability, the core optimization process remains unchanged: they are all driven by the magnitude of the gradient. This shared reliance makes them all fundamentally susceptible to the influence of label noise.

A major line of research for improving the robustness of boosting focuses on designing robust loss functions that mitigate the influence of label noise and outliers. Early studies introduced symmetric or truncated non-convex losses—such as the Savage loss—to cap the impact of large errors, theoretically proving their resistance to symmetric noise and developing algorithms like SavageBoost \textcolor{blue}{\cite{masnadi2008design}}. To further prevent boosting algorithms from overemphasizing mislabeled ``hard'' samples, Freund \textcolor{blue}{\shortcite{freund2009robust}} proposed BrownBoost and RobustBoost, which employ dynamically changing non-convex potential functions that effectively ``give up'' on persistently difficult examples, thereby improving robustness under noisy conditions. In recent years, several practical extensions have been explored: truncated or ramp-based robust boosting methods (e.g., truncated Huber loss) limit gradients for large residuals and achieve stable performance on datasets with outliers \textcolor{blue}{\cite{wang2018robust}}; SPLBoost integrates self-paced learning with boosting to gradually include samples during training, distinguishing between easy, hard, and noisy examples to enhance label-noise tolerance \textcolor{blue}{\cite{wang2019splboost}}; and more recently, Robust-GBDT extends non-convex robust losses to gradient-boosted decision trees, improving robustness in tabular data tasks \textcolor{blue}{\cite{luo2025robust}}.

However, despite their theoretical contributions, widespread adoption of methods like BrownBoost, RobustBoost, and SPLBoost has been hindered by the lack of officially maintained, production-ready implementations compatible with modern Python ecosystems. Consequently, standard practice in recent robust boosting literature (e.g., \textcolor{blue}{\cite{luo2025robust}}) prioritizes benchmarking against widely adopted frameworks such as XGB and LGBM rather than these specialized variants. Furthermore, while effective at capping the influence of outliers, these existing robust methods remain fundamentally “magnitude-based.” They operate under the assumption that large errors equate to noise, failing to differentiate between a genuinely hard, informative example and a noisy one.
\end{document}